\newtheorem{myDef}{Definition}
\newtheorem{myProp}{Proposition}
\newtheorem{myLem}{Lemma}
\begin{document}
%
\title{Energy Confused Adversarial Metric Learning for Zero-Shot Image Retrieval and Clustering}
\author{Binghui Chen, Weihong Deng\\
Beijing University of Posts and Telecommunications\\
{\tt\small chenbinghui@bupt.edu.cn, whdeng@bupt.edu.cn}
}
\maketitle
\begin{abstract}
Deep metric learning has been widely applied in many computer vision tasks, and recently, it is more attractive in \emph{zero-shot image retrieval and clustering}(ZSRC) where a good embedding is requested such that the unseen classes can be distinguished well. Most existing works deem this 'good' embedding just to be the discriminative one and thus race to devise powerful metric objectives or hard-sample mining strategies for leaning discriminative embedding. However, in this paper, we first emphasize that the generalization ability is a core ingredient of this 'good' embedding as well and largely affects the metric performance in zero-shot settings as a matter of fact. Then, we propose the Energy Confused Adversarial Metric Learning(ECAML) framework to explicitly optimize a robust metric. It is mainly achieved by introducing an interesting Energy Confusion regularization term, which daringly breaks away from the traditional metric learning idea of discriminative objective devising, and seeks to 'confuse' the learned model so as to encourage its generalization ability by reducing overfitting on the seen classes. We train this confusion term together with the conventional metric objective in an adversarial manner. Although it seems weird to 'confuse' the network, we show that our ECAML indeed serves as an efficient regularization technique for metric learning and is applicable to various conventional metric methods. This paper empirically and experimentally demonstrates the importance of learning embedding with good generalization, achieving state-of-the-art performances on the popular CUB, CARS, Stanford Online Products and In-Shop datasets for ZSRC tasks. \textcolor[rgb]{1, 0, 0}{Code available at http://www.bhchen.cn/}.
\end{abstract}

\section{1. Introduction}\label{sec_intro}
Since \emph{zero-shot learning} (ZSL) removes the limitation of category-consistency between training and testing sets, it turns to be more attractive where the model is required to learn concepts from \emph{seen} classes and then enables to distinguish the \emph{unseen} classes. ZSL has been widely explored in image classification \cite{changpinyo2016synthesized,fu2015transductive,zhang2015zero,zhang2017learning} and retrieval tasks \cite{dalton2013zero,shen2018zero,oh2016deep}, \emph{etc}. In this paper, we focus on \emph{zero-shot image retrieval and clustering} tasks(ZSRC).

\begin{figure}[t]
  \centering
  \vspace{-2em}
  \begin{minipage}{0.9\linewidth}
  \centering
  \includegraphics[width=1\linewidth]{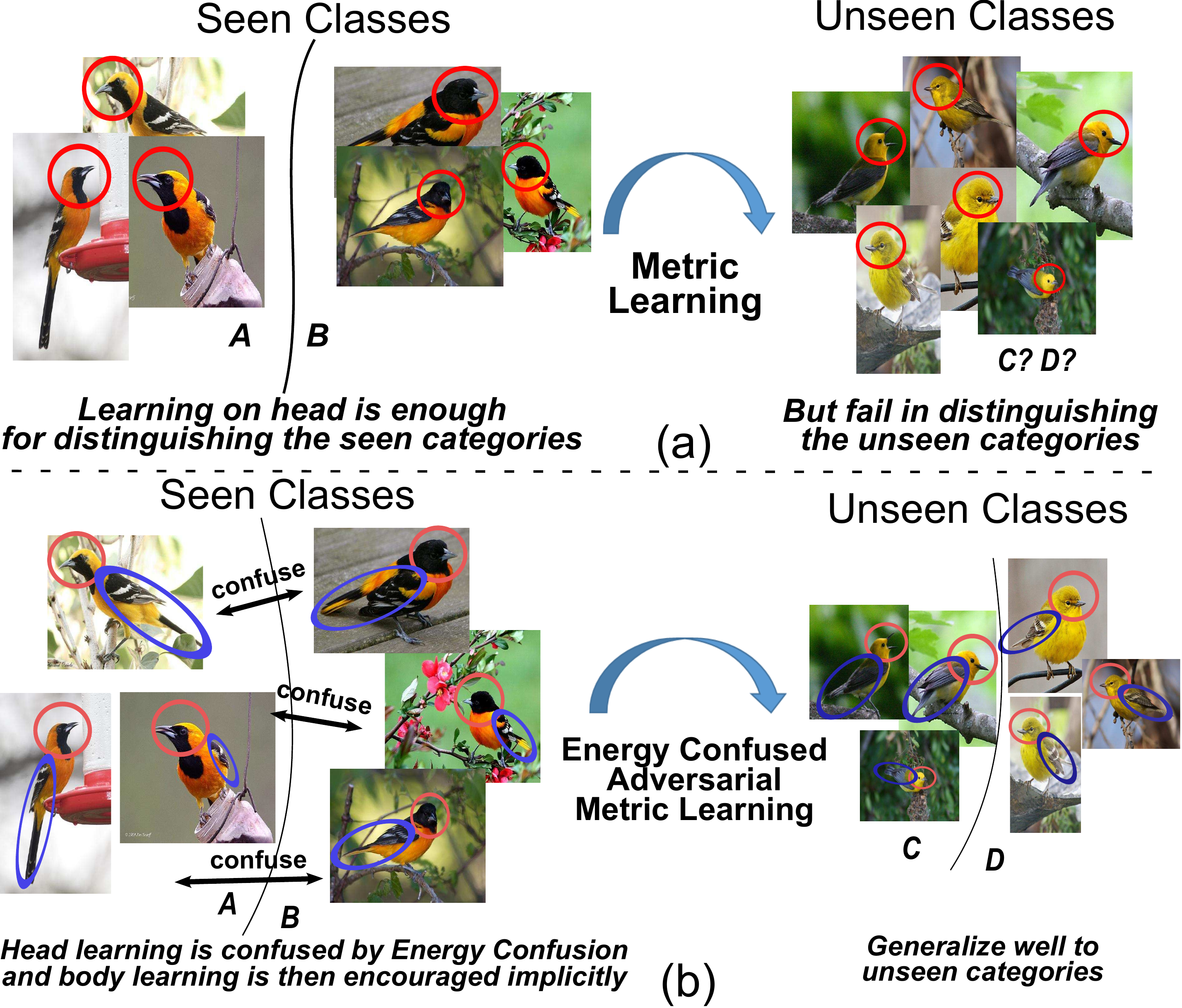}\\
  \captionsetup{font={footnotesize}}
  \vspace{-1em}
  \caption{Comparisons of conventional metric learning methods and our Energy Confused Adversarial Metric Learning (ECAML). In (a), the deep model optimized by conventional metric learning will selectively learn head knowledge which is the easiest one to reduce the current training error and omit other helpful concepts, but the testing instances cannot be distinguished well by the head. In (b), the Energy Confusion(EC) term among different classes is introduced so as to make the biased head-based metric confused about itself, then as the training going, EC will regularize this metric to explore other complementary knowledge (even if this knowledge is not discriminative enough for the \emph{seen} classes, it might be helpful for the \emph{unseen} classes) and thus improve the generalization ability.}
  \label{fig_intro}
  \end{minipage}
  \vspace{-2em}
\end{figure}
In order to accurately retrieve and cluster the \emph{unseen} classes, most existing works employ \emph{Deep Metric Learning} to optimize a good embedding, such as exploring tuple-based loss functions \cite{sun2014deep,Yuan_2017_ICCV,Wu_2017_ICCV,Schroff2015FaceNet,oh2016deep,wang2017deep,Huang2016Local,Sohn2016npair} and proposing efficient hard-sample mining strategies \cite{kumar2017smart,Wu_2017_ICCV,Schroff2015FaceNet}, \emph{etc}. However, the above methods deem this 'good' embedding just to be the discriminative one and then concentrate on the discriminative learning over the \emph{seen} classes, but neglect the importance of the generalization ability of the learned metric which is significant in ZSRC as well, as a result, without robustness constraining they are easily subject to concepts overfitting problem on the \emph{seen} classes and some helpful or general knowledge for \emph{unseen} classes may have been left out with a high probability.

To be specific, in ZSRC, we emphasize that the generalization ability of the learned embedding is seriously affected by the following problem: ``the biased learning behavior of deep models'', concretely, as illustrated in Fig.\ref{fig_intro}.(a)\footnote{In fact, the learned partial biased knowledge is more complicated and cannot be easily illustrated in figure, here for intuitive understanding, we translate it into some single body-part knowledge.}, for a functional learner parameterized by CNN, to correctly distinguish classes A and B, it will selectively learn the partial biased attributes concepts that are the easiest ones to reduce the current training loss over the \emph{seen} classes (here head knowledge is enough to sperate class A from B and thus is learned), instead of learning all-sided details and concepts, thus yielding over-fitting on \emph{seen} classes and generalizing worse to \emph{unseen} ones (classes C and D). In another word, in order to correctly recognize classes, deep networks easily learn to focus on surface statistical regularities rather than more general abstract concepts.

Therefore, when learning the embedding as in the aforementioned conventional metric learning methods, this issue objectively exists and impedes the learning of the desired good embedding. And without explicit and benign robustness constraint, the learned embedding is unable to generalize well to the \emph{unseen} classes. Most ZSRC works ignore the importance to learn robust descriptors. To this end, proposing efficient regularization method for conventional metric learning to learn metrics with good generalization is important, especially in ZSRC tasks.

In this paper, we propose the \textbf{\emph{Energy Confused Adversarial Metric Learning}} (ECAML) framework, an elegant regularization strategy, to alleviate the problem of generalization in ZSRC tasks by randomly confusing the learned metric during each iteration. It is mainly achieved by a novel and simple \emph{Energy Confusion} (EC) term which is 'plug and play' and can be generally applied to many existing deep metric learning approaches. Concretely, this confusion term plays an adversary role against the conventional metric learning objective, which intends to minimize the expected value of the Euclidean distances between the paired images from two different categories. As illustrated in Fig.\ref{fig_intro}.(b), confusing the biased head-based metric will make the model less discriminative on the \emph{seen} classes by reducing its dependence on head learning and thus give it chances of exploring other complementary and general knowledge, preventing overfitting on the \emph{seen} classes and improving the generalization ability of the embedding in an adversarial manner. In another word, the EC term allows the SGD solver to escape from the 'bad' local-minima region induced by the \emph{seen} classes and to explore more for the robust one. The main contributions of this work can be summarized as follows:

\begin{small}
\begin{itemize}
  \item We emphasize that the crucial issue to ZSRC, i.e. \emph{the biased learning behavior of deep model}, is the key stumbling block of improving the generalization ability of the learned embedding.
  \item We propose \textbf{\emph{Energy Confused Adversarial Metric Learning}}(ECAML) framework to reinforce the robustness of embedding in an adversarial manner. The Energy Confusion(EC) term is 'plug and play' and can work in conjunction with many existing metric methods. To our knowledge, it is the first work to introduce confusion for deep metric learning.
  \item Extensive experiments have been performed on several popular datasets for ZSRC, including CARS\cite{Krause20133D}, CUB\cite{Wah2011The}, Stanford Online Products\cite{oh2016deep} and In-shop\cite{liu2016deepfashion}, achieving state-of-the-art performances.
\end{itemize}
\end{small}
\vspace{-1em}
\section{2. Related Work}
\textbf{Zero-shot setting}: ZSL has been widely explored in many computer vision tasks, such as image classification\cite{changpinyo2016synthesized,fu2015transductive,zhang2015zero} and image retrieval\cite{dalton2013zero,shen2018zero}. Most of these ZSL methods are capable of exploiting the extra auxiliary supervision information of the \emph{unseen} classes (e.g. word representations of semantic name), thus aligning the learned features in an explicit manner. However in real applications, collecting and labelling these auxiliary information is time-consuming and impractical. Our ECAML concentrates on a more actual scene where there are only \emph{seen} class labels available.

\textbf{Deep metric learning for ZSRC}: The commonly used contrastive\cite{sun2014deep} and triplet loss\cite{Schroff2015FaceNet} have been broadly studied. Additionally, there are some other deep metric learning works: Smart-mining\cite{kumar2017smart} combines local triplet loss and global loss to optimize the deep metric with hard-samples mining. Sampling-Matters\cite{Wu_2017_ICCV} proposes distance weighted sampling strategy. Angular loss\cite{wang2017deep} optimizes a triangle-based angular function. Proxy-NCA\cite{Movshovitz-Attias_2017_ICCV} explains why popular classification loss works from a proxy-agent view, and its implementation is very similar to Softmax. ALMN\cite{chen2018almn} proposes to generate geometrical virtual negative point instead of employing hard-sample mining for learning discriminative embedding. However, all the above methods are to cope with the metric by designing discriminative losses or exploring sample-mining strategies, thus suffer from the aforementioned issue easily. Additionally, HDC\cite{Yuan_2017_ICCV} employs the cascaded models and selects hard-samples from different levels and models. BIER loss\cite{Opitz_2017_ICCV,opitz2018deep} adopts the online gradients boosting methods. These methods try to improve the performances by resorting to the ensemble idea. Different from all these methods, our ECAML has a clear object of improving the generalization ability of the learned metric by introducing the Energy Confusion regularization term.

\textbf{Regularization technique}: Regularization methods sometimes are important for deep models as the deep models are more likely to be data-driven. There are some works injecting random noise into deep nets so as to ensure the robust training, such as Bengio \emph{et al}.\cite{bengio2013estimating} and Gulcehre \emph{et al}.\cite{gulcehre2016noisy} add noise in the ReLU and Sigmoid activation functions respectively, Blundell \emph{et al}.\cite{blundell2015weight}, Graves\cite{graves2011practical} and Neelakantan \emph{et al}.\cite{neelakantan2015adding} add noise in weights and gradients respectively. Moreover, some research works intend to regularize the deep models at the top layer, i.e. Softmax classifier layer, for example, Szegedy \emph{et al}.\cite{szegedy2016rethinking} propose label-smoothing regularization technique for training deep models, Xie \emph{et al}.\cite{xie2016disturblabel} propose label-disturbing technique for improving the generalization ability of the deep models and, Chen \emph{et al}.\cite{chen2017noisy} inject annealed noise into the softmax activations so as to boost the generalization ability by postponing the early Softmax saturation behavior. However, different from these above methods which are mainly devised for classification tasks and applicable to the Softmax classifier layer, our ECAML aims to promote the generalization ability of the metric learning in ZSRC tasks, and it is achieved by training the EC term in an adversarial manner.
\vspace{-0.5em}
\section{3 Notations and Preliminaries}\label{sec_natation}
In this section, we review some notations and the necessary preliminaries on the relation between semimetric and RKHS kernels for later convenience, which will be used to interpret the differences between our EC (Sec.4.1\ref{sec_EC}) and some other existing methods, i.e. \emph{general energy distance} and \emph{maximum mean discrepancy}.

If not specified, we will assume that $\mathcal{Z}$ is any topological space where the Borel measures can be defined. Denote by $\mathcal{M}(\mathcal{Z})$ the set of all finite signed Borel measures on $\mathcal{Z}$, and by $\mathcal{M}_{+}^{1}(\mathcal{Z})$ the set of all Borel probability measures on $\mathcal{\mathcal{Z}}$.
\begin{myDef}
(RKHS) Let $\mathcal{H}$ be a Hilbert space of real-valued functions defined on $\mathcal{Z}$. A function $k$ : $\mathcal{Z}\times{\mathcal{Z}}\rightarrow{R}$ is called a reproducing kernel of $\mathcal{H}$, if (i) $\forall{z} \in \mathcal{Z}$, $k(\cdot,z) \in \mathcal{H}$, and (ii) $\forall{z} \in \mathcal{Z}$, $\forall{f} \in \mathcal{H}$, $<f,k(\cdot,z)>_{\mathcal{H}}~=~f(z)$. If $\mathcal{H}$ has a reproducing kernel, it is called a \emph{reproducing kernel Hilbert space} (RKHS).
\end{myDef}
\begin{myDef}
(Semimetric) Let $\mathcal{Z}$ be a nonempty set and let $\rho:\mathcal{Z}\times{\mathcal{Z}}\rightarrow{[0,+\infty)}$ be a function such that $\forall{z},z^{'}\in{\mathcal{Z}}$, (i) $\rho(z,z^{'})=0$ iff $z=z^{'}$ and (ii) $\rho(z,z^{'})=\rho(z^{'},z)$. $(\mathcal{Z},\rho)$ is called a semimetric space and $\rho$ is a semimetric.
\end{myDef}
\begin{myDef}\label{def3}
(Negative type) Semimetric space $(\mathcal{Z},\rho)$ is said to have negative type if $\forall{n}\geq2,z_{1},\ldots,z_{n}\in{\mathcal{Z}}$, and $\alpha_{1},\ldots,\alpha_{n}\in{\mathbb{R}}$, with $\sum_{i=1}^{n}\alpha_{i}=0$, $\sum_{i=1}^{n}\sum_{j=1}^{n}\alpha_{i}\alpha_{j}\rho(z_{i},z_{j})\leq0$.
\end{myDef}

Then we have the following propositions, which are derived from \cite{van2012harmonic}.
\begin{myProp}
If $\rho$ satisfies Def.\ref{def3}, then so does $\rho^{q}$, where $0<q<1$.
\end{myProp}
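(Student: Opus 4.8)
The plan is to reduce the claim to a pointwise statement in an auxiliary integration variable, using the Bernstein/Lévy--Khintchine-type integral representation of the power function: for every $q\in(0,1)$ there is a constant $c_q>0$ (in fact $c_q=q/\Gamma(1-q)$) with
\[
x^{q}=c_q\int_{0}^{\infty}\bigl(1-e^{-ux}\bigr)\,u^{-1-q}\,\mathrm{d}u,\qquad x\ge 0 .
\]
The integral converges because near $u=0$ the integrand behaves like $x\,u^{-q}$, which is integrable precisely since $q<1$, while near $u=\infty$ it is dominated by $u^{-1-q}$, integrable since $q>0$; the constant is identified by the substitution $v=ux$. Writing $\rho_{ij}:=\rho(z_i,z_j)$, I would plug this into $\sum_{i,j}\alpha_i\alpha_j\rho_{ij}^{q}$ and pull the finite sum inside the integral (mere linearity, since each integral is finite), obtaining
\[
\sum_{i,j}\alpha_i\alpha_j\rho_{ij}^{q}=c_q\int_{0}^{\infty}u^{-1-q}\Bigl(\sum_{i,j}\alpha_i\alpha_j\bigl(1-e^{-u\rho_{ij}}\bigr)\Bigr)\mathrm{d}u .
\]

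Next I would invoke the hypothesis $\sum_i\alpha_i=0$: since $\sum_{i,j}\alpha_i\alpha_j\cdot 1=\bigl(\sum_i\alpha_i\bigr)^{2}=0$, the bracket collapses to $-\sum_{i,j}\alpha_i\alpha_j e^{-u\rho_{ij}}$, so it suffices to show that for each fixed $u>0$ the kernel matrix $\bigl(e^{-u\rho_{ij}}\bigr)_{i,j}$ is positive semidefinite, i.e. $\sum_{i,j}\alpha_i\alpha_j e^{-u\rho_{ij}}\ge 0$ for all real $\alpha_i$ (no constraint needed at this step). This is exactly the Schoenberg-type characterization underlying Def.\ref{def3}: a semimetric $\rho$ is of negative type if and only if $e^{-u\rho}$ is a positive definite kernel for every $u>0$; I would quote the direction ``negative type $\Rightarrow e^{-u\rho}$ positive definite'' from \cite{van2012harmonic}. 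Granting it, the integrand in the last display is $\le 0$ for every $u>0$ and $c_q>0$, hence $\sum_{i,j}\alpha_i\alpha_j\rho_{ij}^{q}\le 0$, which is precisely Def.\ref{def3} for $\rho^{q}$. Finally one checks $\rho^{q}$ is still a semimetric: it is $[0,\infty)$-valued, symmetric because $\rho$ is, and $\rho(z,z')^{q}=0\iff\rho(z,z')=0\iff z=z'$ since $t\mapsto t^{q}$ vanishes only at $0$.

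The main obstacle is not the bookkeeping but the analytic input. If one is unwilling to cite the Schoenberg equivalence, proving ``negative type $\Rightarrow e^{-u\rho}$ positive definite'' directly is the real work --- typically done either through the isometric embedding $z\mapsto\phi(z)$ of $(\mathcal{Z},\rho^{1/2})$ into a Hilbert space $\mathcal{H}$ (so that $\rho(z,z')=\|\phi(z)-\phi(z')\|_{\mathcal{H}}^{2}$ and $e^{-u\rho}$ becomes a Gaussian kernel on $\mathcal{H}$, classically positive definite), or through a normalization argument that directly converts the negative-type inequality into the positive-definiteness of $e^{-u\rho}$. The only other place where care is needed is keeping the $q\in(0,1)$ hypothesis exactly where convergence of the representing integral demands it (integrability at $u=0$ uses $q<1$, at $u=\infty$ uses $q>0$); the cancellation of the constant term via $\sum_i\alpha_i=0$ and the sign propagation through the positive integral are immediate.
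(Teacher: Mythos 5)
Your proposal is correct: the Bernstein integral representation $x^{q}=\frac{q}{\Gamma(1-q)}\int_{0}^{\infty}(1-e^{-ux})u^{-1-q}\,\mathrm{d}u$, the cancellation of the constant term via $\sum_{i}\alpha_{i}=0$, and the Schoenberg direction ``negative type $\Rightarrow e^{-u\rho}$ positive definite'' (which indeed follows from Prop.~\ref{prop2}'s Hilbert-space embedding together with positive definiteness of the Gaussian kernel) constitute the standard proof. The paper itself gives no proof of this proposition --- it simply imports it from \cite{van2012harmonic} --- and your argument is essentially the one found in that cited source, so there is nothing further to reconcile.
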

\begin{myProp}
$\rho$ is a semimetric of negative type iff there exists a $\mathcal{H}$ and an injective map $\varphi:\mathcal{Z}\rightarrow{\mathcal{H}}$, such that\label{prop2}
\begin{equation}\label{eq_prop2}
  \rho(z,z^{'})=\|\varphi(z)-\varphi(z^{'})\|^{2}_{\mathcal{H}}
\end{equation}
\end{myProp}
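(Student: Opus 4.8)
The plan is to prove the two implications separately, beginning with the easy ``if'' direction. Suppose $\rho(z,z')=\|\varphi(z)-\varphi(z')\|^{2}_{\mathcal{H}}$ for some Hilbert space $\mathcal{H}$ and some injective $\varphi:\mathcal{Z}\to\mathcal{H}$. Symmetry is immediate, and $\rho(z,z')=0$ forces $\varphi(z)=\varphi(z')$, hence $z=z'$ by injectivity, so $\rho$ is a semimetric. For the negative-type property, fix $n\geq2$, points $z_{1},\dots,z_{n}$, and reals $\alpha_{1},\dots,\alpha_{n}$ with $\sum_{i}\alpha_{i}=0$. Expanding $\|\varphi(z_{i})-\varphi(z_{j})\|^{2}_{\mathcal{H}}=\|\varphi(z_{i})\|^{2}-2\langle\varphi(z_{i}),\varphi(z_{j})\rangle+\|\varphi(z_{j})\|^{2}$ and using $\sum_{i}\alpha_{i}=0$ to annihilate the two single-index terms leaves
\[
\sum_{i=1}^{n}\sum_{j=1}^{n}\alpha_{i}\alpha_{j}\rho(z_{i},z_{j})=-2\Big\|\sum_{i=1}^{n}\alpha_{i}\varphi(z_{i})\Big\|^{2}_{\mathcal{H}}\leq0 .
\]

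The ``only if'' direction carries the real content. Since the claim is vacuous when $\mathcal{Z}=\emptyset$, fix a base point $z_{0}\in\mathcal{Z}$ and define $k(z,z'):=\tfrac{1}{2}\big(\rho(z,z_{0})+\rho(z',z_{0})-\rho(z,z')\big)$. The first and main step is to show that $k$ is a positive-definite kernel. Given $z_{1},\dots,z_{n}\in\mathcal{Z}$ and $\alpha_{1},\dots,\alpha_{n}\in\mathbb{R}$, put $\alpha_{0}:=-\sum_{i=1}^{n}\alpha_{i}$ so that $\sum_{i=0}^{n}\alpha_{i}=0$, and apply Def.~\ref{def3} to the $n+1$ points $z_{0},z_{1},\dots,z_{n}$. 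Using $\rho(z_{0},z_{0})=0$ together with $\sum_{i=0}^{n}\alpha_{i}=0$, a short rearrangement yields the identity $\sum_{i,j=0}^{n}\alpha_{i}\alpha_{j}\rho(z_{i},z_{j})=-2\sum_{i,j=1}^{n}\alpha_{i}\alpha_{j}k(z_{i},z_{j})$, so that negative type of $\rho$ forces $\sum_{i,j=1}^{n}\alpha_{i}\alpha_{j}k(z_{i},z_{j})\geq0$.

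The second step is the classical Moore--Aronszajn construction: from the positive-definite kernel $k$ one obtains a unique RKHS $\mathcal{H}$ with reproducing kernel $k$, realized as the completion of $\mathrm{span}\{k(\cdot,z):z\in\mathcal{Z}\}$ under the inner product determined by $\langle k(\cdot,z),k(\cdot,z')\rangle_{\mathcal{H}}=k(z,z')$. Setting $\varphi(z):=k(\cdot,z)$, and noting $k(z,z)=\rho(z,z_{0})$ because $\rho(z,z)=0$, a direct computation gives $\|\varphi(z)-\varphi(z')\|^{2}_{\mathcal{H}}=k(z,z)-2k(z,z')+k(z',z')=\rho(z,z')$, which is Eq.~\eqref{eq_prop2}; injectivity of $\varphi$ then follows at once, since $\varphi(z)=\varphi(z')$ implies $\rho(z,z')=0$ and hence $z=z'$.

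I expect the only real obstacle to be bookkeeping: pinning down the constants in the augmented-point identity $\sum_{i,j=0}^{n}\alpha_{i}\alpha_{j}\rho(z_{i},z_{j})=-2\sum_{i,j=1}^{n}\alpha_{i}\alpha_{j}k(z_{i},z_{j})$, and invoking the Moore--Aronszajn theorem at the right level of rigor (in particular checking that the bilinear form induced by $k$ is well defined and positive semidefinite on finite linear combinations, so the completion makes sense). The remaining ingredients --- the semimetric axioms, the negative-type verification in the easy direction, and the final distance computation --- are routine, and Proposition~1 is not needed here.
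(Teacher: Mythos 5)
Your proof is correct, but it is worth noting that the paper never actually proves this proposition: it is stated as imported background, ``derived from'' the cited harmonic-analysis reference, so there is no in-paper argument to match. What you have written is essentially a self-contained reconstruction of the standard Schoenberg-type proof, and it quietly re-derives two pieces of machinery the paper states separately without proof: your augmented-point computation (adding $z_{0}$ with weight $\alpha_{0}=-\sum_{i}\alpha_{i}$ and using $\rho(z_{0},z_{0})=0$ to get $\sum_{i,j=0}^{n}\alpha_{i}\alpha_{j}\rho(z_{i},z_{j})=-2\sum_{i,j=1}^{n}\alpha_{i}\alpha_{j}k(z_{i},z_{j})$) is exactly the ``positive definite iff negative type'' content of Lem.\ref{lem1} for the distance-induced kernel, and your Moore--Aronszajn step with $\varphi(z)=k(\cdot,z)$, $\|\varphi(z)-\varphi(z')\|^{2}_{\mathcal{H}_{k}}=k(z,z)-2k(z,z')+k(z',z')=\rho(z,z')$ and injectivity from $\rho(z,z')=0\Rightarrow z=z'$ is precisely the nondegeneracy and isometric-embedding content of Prop.\ref{prop3}. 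I checked the details: the cancellation of single-index terms in the easy direction (using $\sum_{i}\alpha_{i}=0$), the factor $-2$ in the augmented identity, the evaluation $k(z,z)=\rho(z,z_{0})$, and the harmless edge case $n=1$ (where the augmented configuration still has $n+1\geq 2$ points as required by Def.\ref{def3}) all go through, and you are right that Proposition~1 is not needed. The practical upshot of your route is that it makes the proposition self-contained modulo only the Moore--Aronszajn theorem, whereas the paper outsources the whole equivalence to the external reference; if one wanted to splice your argument into the paper, the cleaner organization would be to prove Lem.\ref{lem1} first and then deduce this proposition from it together with Prop.\ref{prop3}, since that is the logical order your proof implicitly follows.
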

This shows that $(\mathbb{R}^{d},\|\cdot-\cdot\|^{2})$ is of negative type, and by taking $q = 1/2$, we conclude that all Euclidean spaces are of negative type\cite{sejdinovic2012hypothesis,sejdinovic2013equivalence}
, which will be used to reason our Energy Confusion term. Then we also show that the semimetrics of negative type and symmetric positive definite kernels are in fact closely related by the following Lemma(for more details please refer to \cite{van2012harmonic}).
\begin{myLem}For a nonempty $\mathcal{Z}$, let $\rho$ be a semimetric on $\mathcal{Z}$. Let $z_{0}\in{\mathcal{Z}}$, and denote $k(z,z^{'})=\frac{1}{2}(\rho(z,z_{0})+\rho(z^{'},z_{0})-\rho{(z,z^{'}))}$. Then k is positive definite iff $\rho$ is of negative type.\label{lem1}
\end{myLem}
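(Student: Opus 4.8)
The plan is to establish both implications by directly expanding the quadratic form $\sum_{i,j}c_ic_j k(z_i,z_j)$ and comparing it with the negative-type condition of Def.\ref{def3}, using throughout the two elementary identities $\rho(z_0,z_0)=0$ and $\sum_{i,j}c_ic_j\rho(z_i,z_0)=\bigl(\sum_j c_j\bigr)\bigl(\sum_i c_i\rho(z_i,z_0)\bigr)$. A preliminary observation that motivates everything: if $\rho(z,z')=\|\varphi(z)-\varphi(z')\|_{\mathcal H}^2$ as in Prop.\ref{prop2}, then a one-line computation gives $k(z,z')=\langle\varphi(z)-\varphi(z_0),\,\varphi(z')-\varphi(z_0)\rangle_{\mathcal H}$, i.e.\ $k$ is $\rho$ ``centered'' at $z_0$; so the Lemma should amount to bookkeeping with this centering.

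For the ``$\Leftarrow$'' direction, assume $\rho$ has negative type and fix $n\ge 1$, points $z_1,\dots,z_n$ and reals $c_1,\dots,c_n$. First I expand $\sum_{i,j}c_ic_j k(z_i,z_j)=\bigl(\sum_i c_i\bigr)\bigl(\sum_i c_i\rho(z_i,z_0)\bigr)-\tfrac12\sum_{i,j}c_ic_j\rho(z_i,z_j)$. The key step is then to append the point $z_0$ to the configuration with weight $c_0:=-\sum_{i=1}^n c_i$, so that the enlarged weight vector $(c_0,c_1,\dots,c_n)$ sums to zero and Def.\ref{def3} becomes applicable to $z_0,z_1,\dots,z_n$; expanding that inequality and using $\rho(z_0,z_0)=0$ gives $\sum_{i,j=1}^n c_ic_j\rho(z_i,z_j)\le 2\bigl(\sum_i c_i\bigr)\bigl(\sum_i c_i\rho(z_i,z_0)\bigr)$. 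Substituting this bound into the previous expansion yields $\sum_{i,j}c_ic_j k(z_i,z_j)\ge 0$, which is positive definiteness of $k$.

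For the ``$\Rightarrow$'' direction, assume $k$ is positive definite and fix $n\ge 1$, points $z_1,\dots,z_n$ and reals $\alpha_1,\dots,\alpha_n$ with $\sum_i\alpha_i=0$. The same expansion as above, now with the constraint $\sum_i\alpha_i=0$ annihilating the two terms that involve $\rho(\cdot,z_0)$, collapses to $\sum_{i,j}\alpha_i\alpha_j k(z_i,z_j)=-\tfrac12\sum_{i,j}\alpha_i\alpha_j\rho(z_i,z_j)$. Since the left-hand side is nonnegative by hypothesis, $\sum_{i,j}\alpha_i\alpha_j\rho(z_i,z_j)\le 0$, so $\rho$ has negative type.

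I expect the only genuinely non-routine point to be the augmentation device in the ``$\Leftarrow$'' direction: one must notice that Def.\ref{def3} applies only to zero-sum weight vectors, and that the ``correction'' weight $c_0=-\sum_i c_i$ at the base point $z_0$ is exactly what converts an arbitrary configuration $(c_i)$ into an admissible one while contributing precisely the cross terms needed to control $\sum_{i,j}c_ic_j\rho(z_i,z_j)$. The remaining manipulations --- the two bilinear expansions and the cancellation using the zero-sum constraint --- are mechanical, though some care is needed to track which sums run from $0$ and which from $1$.
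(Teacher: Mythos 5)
Your proof is correct: the expansion $\sum_{i,j}c_ic_jk(z_i,z_j)=\bigl(\sum_ic_i\bigr)\bigl(\sum_ic_i\rho(z_i,z_0)\bigr)-\tfrac12\sum_{i,j}c_ic_j\rho(z_i,z_j)$ is right, the augmentation with weight $c_0=-\sum_ic_i$ at $z_0$ (using $\rho(z_0,z_0)=0$) correctly yields the needed bound for the ``$\Leftarrow$'' direction, and the zero-sum cancellation gives the ``$\Rightarrow$'' direction. Note that the paper itself does not prove Lemma~\ref{lem1} --- it is quoted from \cite{van2012harmonic} --- and your argument is precisely the classical Schoenberg-type proof found there, so there is nothing differing to compare against.
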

We call the kernel defined above \emph{distance-induced kernel} and, it is induced by the semimetric $\rho$ and centered at $z_{0}$. By varying the point at the center $z_{0}$, we obtain a kernel family $\mathcal{K}_{\rho}={\frac{1}{2}[\rho(z,z_{0})+\rho(z^{'},z_{0})-\rho(z,z^{'})]}_{z_{0}\in{\mathcal{Z}}}$, induced by $\rho$. Then we can always express Eq.\ref{eq_prop2} in terms of the canonical feature map for RKHS $\mathcal{H}_{k}$ as the following proposition.
\begin{myProp}
Let $(\mathcal{Z},\rho)$ be a semimetric space of negative type, and $k\in{\mathcal{K}}_{\rho}$. Then:\label{prop3}
\begin{enumerate}
  \item $k$ is nondegenerate, i.e. the Aronszajn map $z\rightarrow{k(\cdot,z)}$ is injective.
  \item $\rho(z,z^{'})=k(z,z)+k(z^{'},z^{'})-2k(z,z^{'})=\|k(\cdot,z)-k(\cdot,z^{'})\|^{2}_{\mathcal{H}_{k}}$
\end{enumerate}
\end{myProp}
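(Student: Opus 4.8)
The plan is to prove statement~2 first, since statement~1 then follows almost immediately. Observe at the outset that the hypothesis ``$(\mathcal{Z},\rho)$ is of negative type'' is exactly what makes this proposition meaningful: by Lemma~\ref{lem1}, every $k\in\mathcal{K}_{\rho}$ is positive definite, so the Moore--Aronszajn construction applies and yields a genuine RKHS $\mathcal{H}_{k}$ with reproducing kernel $k$ and canonical (Aronszajn) feature map $z\mapsto k(\cdot,z)$. Everything in Proposition~\ref{prop3} is then a statement about this fixed $\mathcal{H}_{k}$.

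For the first equality in statement~2, $\rho(z,z')=k(z,z)+k(z',z')-2k(z,z')$, I would simply substitute the definition $k(z,z')=\tfrac12\bigl(\rho(z,z_{0})+\rho(z',z_{0})-\rho(z,z')\bigr)$. The one point worth isolating is that, by semimetric axiom~(i), $\rho(z,z)=0$, so the diagonal collapses to $k(z,z)=\rho(z,z_{0})$ and $k(z',z')=\rho(z',z_{0})$; plugging these into the right-hand side, the two copies of $\rho(z,z_{0})$ and the two copies of $\rho(z',z_{0})$ cancel and only $\rho(z,z')$ remains. For the second equality, $\rho(z,z')=\|k(\cdot,z)-k(\cdot,z')\|^{2}_{\mathcal{H}_{k}}$, I would use the reproducing property $\langle k(\cdot,z),k(\cdot,z')\rangle_{\mathcal{H}_{k}}=k(z,z')$ and expand $\|k(\cdot,z)-k(\cdot,z')\|^{2}_{\mathcal{H}_{k}}=\langle k(\cdot,z)-k(\cdot,z'),\,k(\cdot,z)-k(\cdot,z')\rangle_{\mathcal{H}_{k}}$ by bilinearity into $k(z,z)+k(z',z')-2k(z,z')$, which is precisely the middle expression just computed.

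Statement~1 then follows by contraposition. Suppose $k(\cdot,z)=k(\cdot,z')$ as elements of $\mathcal{H}_{k}$. Then $\|k(\cdot,z)-k(\cdot,z')\|^{2}_{\mathcal{H}_{k}}=0$, so by statement~2 we get $\rho(z,z')=0$, and semimetric axiom~(i) forces $z=z'$. Hence the Aronszajn map is injective, i.e.\ $k$ is nondegenerate.

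The argument is essentially bookkeeping once the relevant facts are lined up, so there is no serious obstacle; if anything, the only genuine content is ensuring that the RKHS $\mathcal{H}_{k}$ is legitimately available before one starts manipulating $\|\cdot\|_{\mathcal{H}_{k}}$, which is exactly where the negative-type hypothesis is used, via positive definiteness of $k$ (Lemma~\ref{lem1}) and Moore--Aronszajn. Note that this route does not require Proposition~\ref{prop2} or the embedding $\varphi$ at all, although one could alternatively deduce statement~2 by relating $k(\cdot,z)$ to the map $\varphi$ of Proposition~\ref{prop2}.
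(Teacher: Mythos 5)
Your proof is correct. The paper does not prove Proposition~\ref{prop3} itself (it imports it from the cited reference on semimetrics and kernels), and your argument is exactly the standard one behind that citation: Lemma~\ref{lem1} gives positive definiteness (and symmetry, via semimetric axiom (ii)) of the distance-induced kernel so that Moore--Aronszajn yields $\mathcal{H}_{k}$; the identities $k(z,z)=\rho(z,z_{0})$ (using $\rho(z,z)=0$) together with the reproducing property give $\rho(z,z^{'})=k(z,z)+k(z^{'},z^{'})-2k(z,z^{'})=\|k(\cdot,z)-k(\cdot,z^{'})\|^{2}_{\mathcal{H}_{k}}$; and injectivity of the Aronszajn map follows since $k(\cdot,z)=k(\cdot,z^{'})$ forces $\rho(z,z^{'})=0$ and hence $z=z^{'}$ by semimetric axiom (i).
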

For the above valid $\rho$, we say that $k$ generates $\rho$. And the above proposition implies that the Aronszajn map $z\rightarrow{k(\cdot,z)}$ is an isometric embedding of a metric space $(\mathcal{Z},\rho^{1/2})$ into $\mathcal{H}_{k}$, for each $k\in{\mathcal{K}_{\rho}}$. Lem.\ref{lem1} and Prop.\ref{prop3} reveal the general link between semimetrics of negative type and RKHS kernels in different views. By taking some special cases of $\rho$ and $k$, we are able to elucidate our EC in the following sections.
\section{4. Proposed Approach}
\subsection{4.1 Energy Confusion}\label{sec_EC}
As discussed in (Sec.1\ref{sec_intro}), without taking the generalization ability into consideration explicitly, simply optimizing a discriminative objective metric functions or applying hard-sample mining strategies like in most existing metric learning works wouldn't lead a robust metric for ZSRC tasks, since the 'biased learning behavior of deep models' will mostly force the network to fit the surface statistical regularities rather than the more general abstract concepts, i.e. it will only highlight the concepts that are discriminative for the \emph{seen} classes instead of keeping all-sided information, resulting in overfitting on the \emph{seen} categories and limiting the generalization ability of the learned embedding.

Consider that the biased learning behavior is actually induced by the nature of model training since in order to correctly distinguish different \emph{seen} classes, the deep metric has to be confident about the feature distribution prediction over the current \emph{seen} classes as far as possible(e.g. features of different classes should be far away from each other) and as a result, only the partial biased knowledge that are discriminative to separate \emph{seen} categories as shown in Fig.\ref{fig_intro} are captured while other potentially helpful knowledge are omitted. To this end, \textbf{a natural solution is to introduce an opposite optimizing objective, i.e. a feature distribution confusion term, into the conventional metric learning phase so as to 'confuse' the network and reduce the over-confident predictions of distances between feature distributions on the \emph{seen} classes}. Specifically, denote the input features by $\{x_{i}\}_{i=1}^{N}$, the corresponding label inputs by $\{y_{i}\}^{N}_{i=1}, y_{i}\in{[1\ldots{C}]}$, where $C$ is the number of \emph{seen} classes. The conventional metric optimizing goal is to make the distance measurement $D(x_{i},x_{j})$ as large as possible if $y_{i}\neq{y_{j}}$, otherwise as small as possible, and it can be formulated as:
\vspace{-0.2em}
\begin{equation}\label{eq_metric_learning}
  \theta_{f}=\arg\min_{\theta_{f}}L_{m}(\theta_{f};T,D)
\end{equation}
\vspace{-0.7em}

where $L_{m}$ is some specific metric loss function, $T$ indicates some instance-tuple, e.g. contrastive tuple $T(x_{i},x_{j})$ \cite{sun2014deep}, triplet tuple $T(x_{i},x_{i^{+}},x_{i^{-}})$ \cite{Schroff2015FaceNet} or N-Pair tuple $T(x_{i},x_{i^{+}},x_{i^{-}_{1}},\ldots,x_{i^{-}_{N-2}})$ \cite{Sohn2016npair}, $D$ is the distance distribution measurement, e.g. Euclidean measurement\cite{oh2016deep,Yuan_2017_ICCV,Huang2016Local,Schroff2015FaceNet,Wu_2017_ICCV} or inner-product measurement\cite{Opitz_2017_ICCV,Sohn2016npair}, and $\theta_{f}$ is the metric parameters to be learned.

Therefore, in order to prevent the biased learning behavior by confusing the feature distribution learning, \textbf{\emph{we would like to learn $\theta_{f}$ that make the feature distributions from different classes closer}} when under some specific $\{L,T,D\}$. It seems that the commonly adopted family of $f$-$divergence$ for measuring the difference between two probability distributions might be a suitable choice, such as KL-divergence\cite{kullback1951information}, Hellinger-distance\cite{Hellinger1909} and Total-variation-distance, however, we emphasize that they cannot be directly applied here since they mostly work with the probability measure (where $\sum_{k}x_{i,k}=1$) but our confusion goal is based on the statistical distance between two random vectors following some probability distributions. To this end, we propose the \textbf{\emph{Energy Confusion}} term as follows:
\vspace{-0.5em}
\begin{align}\label{eq_energy_confusion}
L_{ec}(\theta_{f};X_{I},X_{J})=&\mathbb{E}_{\widetilde{X_{I}}\widetilde{X_{J}}}(\|\widetilde{X_{I}}-\widetilde{X_{J}}\|^{2}_{2})\nonumber\\
=&\sum_{i,j}p_{i,j}\|x_{i}-x_{j}\|^{2}_{2}
\end{align}
\vspace{-0.5em}

where $\mathbb{E}$ indicates the expected value, $X_{I},X_{J}$ are two different class sets, $\widetilde{X_{I}},\widetilde{X_{J}}$ are random feature vectors which obey some certain distribution, $x_{i},x_{j}$ are the corresponding feature observations and $p_{i,j}$ is the joint probability. Since during training the samples are uniformly sampled and the classes are independent, we have $\widetilde{X_{I}}\sim{Uniform(X_{I})},\widetilde{X_{J}}\sim{Uniform(X_{J})}$ and $p_{i,j}=p_{i}p_{j}=\frac{1}{N_{I}}\frac{1}{N_{J}}$. In this case, $\{L,T,D\}$ are expected value function, contrastive tuple and Euclidean measurement respectively.

From Eq.\ref{eq_energy_confusion}, one can observe that the EC term intends to minimize the distance expected value between different classes so as to confuse the metric. As discussed above, the learned embedding represents the learned concepts to some extend, and the more accurate the prediction of distance on the \emph{seen} classes, the greater the risk of concepts overfitting. EC serves as a regularization term that would like to prevent the model being over-confident about the \emph{seen} classes and mitigate the biased learning issue by avoiding the learner being stuck in the training-data-specific concepts. In another word, the metric learning is regularized by explicitly reducing model's dependence on the partial biased knowledge, and this is mainly achieved by the idea of feature distribution confusion. Moreover, 'confusing' also gives SGD solver chances of escaping from the 'partial' and 'bad' local-minima induced by the \emph{seen} instances, and then exploring other solution regions for the more 'general' ones.

\textbf{Discussion}: Inferred from the above analysis, it seems that the commonly used \emph{general energy distance(GED)} and \emph{maximum mean discrepancy}(MMD) might be also useful here for confusing the network by pushing different feature distributions closer. However, we will bridge our EC with these two methods, and illuminate the significance of our EC by theoretically accounting for why these two methods cannot be directly applied here.

\textbf{Relation to GED}: Let $(\mathcal{Z},\rho)$ be a semimetric space of negative type, and let $P,Q\in{\mathcal{M}^{1}_{+}(\mathcal{Z})\bigcap{\mathcal{M}^{1}_{\rho}(\mathcal{Z})}}$, then the \emph{general energy distance}(GED) between $P$ and $Q$, w.r.t $\rho$ is:
\begin{equation}\label{eq_ged}
  \small{D_{E,\rho}(P,Q)=2\mathbb{E}_{\widetilde{P}\widetilde{Q}}\rho(\widetilde{P},\widetilde{Q})-\mathbb{E}_{\widetilde{P}\widetilde{P}^{'}}\rho(\widetilde{P},\widetilde{P}^{'})-\mathbb{E}_{\widetilde{Q}\widetilde{Q}^{'}}\rho(\widetilde{Q},\widetilde{Q}^{'})}
\end{equation}
where $\widetilde{P},\widetilde{P}^{'}\stackrel{i.i.d}{\thicksim}P$ and $\widetilde{Q},\widetilde{Q}^{'}\stackrel{i.i.d}{\thicksim}Q$. $D_{E,\rho}$ is a general extension of \emph{energy distance}\cite{szekely2004testing,szekely2005new} on metric space. Then we have:
\begin{myLem}\label{lem2}
For two different class sets $X_{I},X_{J}\in\mathcal{M}^{1}_{+}(\mathcal{Z})\bigcap{\mathcal{M}^{1}_{\rho}(\mathcal{Z})}$, let $\rho$ be squared Euclidean metric, i.e. $\|\cdot-\cdot\|^{2}_{2}$, then:
\vspace{-1em}
\begin{equation}
  L_{ec}(\theta_{f};X_{I},X_{J})\geq{\frac{1}{2}D_{E,\rho}(X_{I},X_{J})}\nonumber
\end{equation}
\end{myLem}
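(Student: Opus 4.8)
The plan is to expand the \emph{general energy distance} of Eq.\ref{eq_ged} for the specific base semimetric $\rho=\|\cdot-\cdot\|^{2}_{2}$ and then match the resulting terms against the definition of the \emph{Energy Confusion} term in Eq.\ref{eq_energy_confusion}; the whole statement then falls out of a one-line algebraic identity plus non-negativity. First I would observe that the squared Euclidean distance is an admissible choice of $\rho$ in Eq.\ref{eq_ged}: by Prop.\ref{prop2} (take $\mathcal H=\mathbb R^{d}$ and $\varphi=\mathrm{id}$) it is a semimetric of negative type, so $D_{E,\rho}$ is well defined on $X_{I},X_{J}\in\mathcal M^{1}_{+}(\mathcal Z)\cap\mathcal M^{1}_{\rho}(\mathcal Z)$ and all three expectations appearing below are finite.

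Next I would write out, using i.i.d.\ copies $\widetilde{X_{I}},\widetilde{X_{I}}'\sim X_{I}$ and $\widetilde{X_{J}},\widetilde{X_{J}}'\sim X_{J}$,
\[
D_{E,\rho}(X_{I},X_{J})=2\,\mathbb{E}\|\widetilde{X_{I}}-\widetilde{X_{J}}\|^{2}_{2}-\mathbb{E}\|\widetilde{X_{I}}-\widetilde{X_{I}}'\|^{2}_{2}-\mathbb{E}\|\widetilde{X_{J}}-\widetilde{X_{J}}'\|^{2}_{2},
\]
and identify the first (cross-class) term with $2L_{ec}(\theta_{f};X_{I},X_{J})$ by the very definition in Eq.\ref{eq_energy_confusion}. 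Rearranging then yields the identity
\[
L_{ec}(\theta_{f};X_{I},X_{J})=\tfrac{1}{2}D_{E,\rho}(X_{I},X_{J})+\tfrac{1}{2}\mathbb{E}\|\widetilde{X_{I}}-\widetilde{X_{I}}'\|^{2}_{2}+\tfrac{1}{2}\mathbb{E}\|\widetilde{X_{J}}-\widetilde{X_{J}}'\|^{2}_{2}.
\]
Since $\|\cdot\|^{2}_{2}\geq 0$, the last two "within-class self-energy" terms are non-negative; dropping them gives exactly $L_{ec}\geq\tfrac{1}{2}D_{E,\rho}$, which is the claim. I would additionally note that equality holds iff both self-energy terms vanish, i.e.\ iff each of $\widetilde{X_{I}}$ and $\widetilde{X_{J}}$ is almost surely constant (each class collapsed to a single point in feature space); this clarifies in what sense $L_{ec}$ merely upper-bounds half of the GED and therefore, unlike $D_{E,\rho}$, does not force the two class distributions to literally coincide — a remark that supports the later discussion of why EC is a gentler, regularization-style confusion than a full distribution-matching objective.

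There is essentially no hard step here: the argument is the above identity together with non-negativity of a squared norm. The only points that require any care are the bookkeeping of which pair of i.i.d.\ copies enters each of the three expectations (cross-class versus within-class), and the appeal to Prop.\ref{prop2} to justify that $\|\cdot-\cdot\|^{2}_{2}$ is a legitimate negative-type semimetric so that Eq.\ref{eq_ged} applies; both are routine, so I expect no genuine obstacle.
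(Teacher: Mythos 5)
Your proposal is correct and follows essentially the same route as the paper's own proof: expand $D_{E,\rho}$ with $\rho=\|\cdot-\cdot\|^{2}_{2}$ (justified via Prop.\ref{prop2}), identify the cross-class term with $L_{ec}$ from Eq.\ref{eq_energy_confusion}, and drop the non-negative within-class expectations. The extra remark on when equality holds is a nice addition but does not change the argument.
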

\begin{proof}
from Prop.\ref{prop2}, if $\rho$ is the squared Euclidean metric, we have $(\mathcal{Z},\rho)$ is of negative type, thus from Eq.\ref{eq_ged}
\begin{small}
\begin{align}\label{eq_ed}
\frac{1}{2}D_{E,\rho}(X_{I},X_{J})=\mathbb{E}(\|\widetilde{X}_{I}-\widetilde{X}_{J}\|^{2}_{2})-\frac{1}{2}\{\mathbb{E}(\|\widetilde{X}_{I}-\widetilde{X}_{I}^{'}\|^{2}_{2})\nonumber\\
+\mathbb{E}(\|\widetilde{X}_{J}-\widetilde{X}_{J}^{'}\|^{2}_{2})\}\nonumber
\end{align}
\end{small}
since $\mathbb{E}(\|\widetilde{X}_{*}-\widetilde{X}_{*}^{'}\|^{2}_{2})\geq0$ always holds, we have
\begin{equation}
  \frac{1}{2}D_{E,\rho}(X_{I},X_{J})\leq\mathbb{E}(\|\widetilde{X}_{I}-\widetilde{X}_{J}\|^{2}_{2})\nonumber
\end{equation}
by substituting Eq.\ref{eq_energy_confusion} here, the proof is completed.
\end{proof}
\textbf{Remark}: From Lem.\ref{lem2}, one can observe that our EC can be viewed as an upper bound of GED, minimizing this upper bound function is equivalent to optimizing GED to some extend. Moreover, it seems that directly optimizing GED with $\rho=\|\cdot-\cdot\|^{2}_{2}$ is reasonable as well, since GED itself is a statistical distance between two probability distributions. However, by comparing EC with GED, we emphasize that directly minimizing GED will additionally make $\small{\mathbb{E}(\|\widetilde{X}_{I}-\widetilde{X}_{I}^{'}\|^{2}_{2})+\mathbb{E}(\|\widetilde{X}_{J}-\widetilde{X}_{J}^{'}\|^{2}_{2})}$ large, i.e. making points in the same class be far away from each other which violates the basic discrimination criterion of metric learning and will degrade the model into a noisy counterpart, it isn't what we desire. Therefore, GED cannot be directly applied here.

\textbf{Relation to MMD}: Let $k$ be a kernel on $\mathcal{Z}$, and let $P,Q\in{\mathcal{M}^{1}_{+}(\mathcal{Z})\bigcap{\mathcal{M}^{1/2}_{k}(\mathcal{Z})}}$. The \emph{maximum mean discrepancy}(MMD) $\gamma_{k}$ between $P$ and $Q$ is:
\begin{small}
\begin{align}\label{eq_mmd}
\gamma_{k}^{2}(P,Q)=&\|\mu_{k}(P)-\mu_{k}(Q)\|_{\mathcal{H}_{k}}^{2}=\|\mathbb{E}_{\widetilde{P}}k(\cdot,\widetilde{P})-\mathbb{E}_{\widetilde{Q}}k(\cdot,\widetilde{Q})\|^{2}_{\mathcal{H}_{k}}\nonumber\\
=&\mathbb{E}_{\widetilde{P}\widetilde{P}^{'}}k(\widetilde{P},\widetilde{P}^{'})+\mathbb{E}_{\widetilde{Q}\widetilde{Q}^{'}}k(\widetilde{Q},\widetilde{Q}^{'})-2\mathbb{E}_{\widetilde{P}\widetilde{Q}}k(\widetilde{P},\widetilde{Q})
\end{align}
\end{small}
\vspace{-1em}

where $\mu_{k}(*)$ is the kernel embedding, $\widetilde{P},\widetilde{P}^{'}\stackrel{i.i.d}{\thicksim}P$ and $\widetilde{Q},\widetilde{Q}^{'}\stackrel{i.i.d}{\thicksim}Q$. Then we have:
\begin{myLem}\label{lem3}
For two different class sets $X_{I},X_{J}\in\mathcal{M}^{1}_{+}(\mathcal{Z})\bigcap{\mathcal{M}^{1/2}_{k}(\mathcal{Z})}$, let $k$ be $degree$-$1$ homogeneous polynomial kernel, then:
\begin{equation}
  L_{ec}(\theta_{f};X_{I},X_{J})\geq\gamma_{k}^{2}(X_{I},X_{J})\nonumber
\end{equation}
\end{myLem}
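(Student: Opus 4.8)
The plan is to reduce everything to elementary Hilbert-space algebra, in direct parallel with the proof of Lem.~\ref{lem2}. First I would pin down the kernel: the degree-$1$ homogeneous polynomial kernel on $\mathcal{Z}\subseteq\mathbb{R}^{d}$ is $k(x,x')=\langle x,x'\rangle$, so its canonical feature map is the identity and $\mathcal{H}_{k}$ is just $\mathbb{R}^{d}$ with the Euclidean inner product; in particular the membership condition $X_{I},X_{J}\in\mathcal{M}^{1/2}_{k}(\mathcal{Z})$ is exactly the finite-second-moment condition that makes all expectations below finite. Writing $\mu_{I}=\mathbb{E}\,\widetilde{X}_{I}$ and $\mu_{J}=\mathbb{E}\,\widetilde{X}_{J}$, I would substitute $k$ into Eq.~\ref{eq_mmd} and use that the copies $\widetilde{X}_{I},\widetilde{X}_{I}'$ are i.i.d.\ (so $\mathbb{E}\langle\widetilde{X}_{I},\widetilde{X}_{I}'\rangle=\langle\mu_{I},\mu_{I}\rangle$), and likewise for $J$ and for the cross term, to get $\gamma_{k}^{2}(X_{I},X_{J})=\|\mu_{I}\|_{2}^{2}+\|\mu_{J}\|_{2}^{2}-2\langle\mu_{I},\mu_{J}\rangle=\|\mu_{I}-\mu_{J}\|_{2}^{2}$ --- i.e.\ the identity $\gamma_{k}^{2}=\|\mu_{k}(P)-\mu_{k}(Q)\|_{\mathcal{H}_{k}}^{2}$ specialised to the identity feature map.

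Next I would expand the left-hand side in the same coordinates: $L_{ec}(\theta_{f};X_{I},X_{J})=\mathbb{E}\|\widetilde{X}_{I}-\widetilde{X}_{J}\|_{2}^{2}=\mathbb{E}\|\widetilde{X}_{I}\|_{2}^{2}+\mathbb{E}\|\widetilde{X}_{J}\|_{2}^{2}-2\langle\mu_{I},\mu_{J}\rangle$, where the cross term collapses because $\widetilde{X}_{I}$ and $\widetilde{X}_{J}$ are independent (they come from two distinct, uniformly sampled classes, as fixed right after Eq.~\ref{eq_energy_confusion}). Comparing this with the display for $\gamma_{k}^{2}$, the asserted inequality is equivalent to $\mathbb{E}\|\widetilde{X}_{I}\|_{2}^{2}\ge\|\mu_{I}\|_{2}^{2}$ and $\mathbb{E}\|\widetilde{X}_{J}\|_{2}^{2}\ge\|\mu_{J}\|_{2}^{2}$, each of which is Jensen's inequality for the convex map $v\mapsto\|v\|_{2}^{2}$ --- equivalently, non-negativity of $\operatorname{tr}\operatorname{Cov}(\widetilde{X}_{\cdot})$. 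Adding the two would then give $L_{ec}(\theta_{f};X_{I},X_{J})-\gamma_{k}^{2}(X_{I},X_{J})=\operatorname{tr}\operatorname{Cov}(\widetilde{X}_{I})+\operatorname{tr}\operatorname{Cov}(\widetilde{X}_{J})\ge0$, which is Lem.~\ref{lem3}.

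There is essentially no hard step here; the only points to be careful about are (i) correctly invoking the i.i.d.\ and independence assumptions built into the definitions of $\widetilde{X}_{I},\widetilde{X}_{J}$, and (ii) verifying that the degree-$1$ homogeneous polynomial kernel really is $\langle\cdot,\cdot\rangle$, so that $\mathcal{M}^{1/2}_{k}$ is the finite-second-moment set guaranteeing finiteness. As a consistency check one can run the alternative route through Sec.~3: by Prop.~\ref{prop3} this $k$ generates $\rho(z,z')=k(z,z)+k(z',z')-2k(z,z')=\|z-z'\|_{2}^{2}$, and the energy-distance/MMD identity $D_{E,\rho}=2\gamma_{k}^{2}$ together with Lem.~\ref{lem2} again yields $L_{ec}\ge\tfrac12 D_{E,\rho}=\gamma_{k}^{2}$; moreover the gap $\operatorname{tr}\operatorname{Cov}(\widetilde{X}_{I})+\operatorname{tr}\operatorname{Cov}(\widetilde{X}_{J})$ found above is precisely the $\tfrac12\{\mathbb{E}\|\widetilde{X}_{I}-\widetilde{X}_{I}'\|_{2}^{2}+\mathbb{E}\|\widetilde{X}_{J}-\widetilde{X}_{J}'\|_{2}^{2}\}$ from the remark after Lem.~\ref{lem2}, reinforcing the point that optimising $\gamma_k^2$ (or GED) directly would inflate within-class scatter, whereas $L_{ec}$ does not.
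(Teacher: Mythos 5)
Your argument is correct, and your primary route is genuinely different from the paper's. The paper proves Lem.~\ref{lem3} by inserting the distance-induced kernel of Lem.~\ref{lem1} into Eq.~\ref{eq_mmd}, cancelling the $z_{0}$-dependent terms to obtain the identity $\gamma_{k}^{2}(X_{I},X_{J})=\tfrac{1}{2}D_{E,\rho}(X_{I},X_{J})$, noting via Prop.~\ref{prop3} that the degree-$1$ homogeneous polynomial kernel generates $\rho=\|\cdot-\cdot\|_{2}^{2}$, and then quoting Lem.~\ref{lem2}; that is exactly the route you relegate to your ``consistency check.'' Your main proof instead works directly in coordinates: with $k(x,x')=\langle x,x'\rangle$ the embedding is the identity, $\gamma_{k}^{2}=\|\mu_{I}-\mu_{J}\|_{2}^{2}$, the expansion of $L_{ec}$ in Eq.~\ref{eq_energy_confusion} uses independence of $\widetilde{X}_{I},\widetilde{X}_{J}$, and the inequality reduces to Jensen, with the explicit gap $\operatorname{tr}\operatorname{Cov}(\widetilde{X}_{I})+\operatorname{tr}\operatorname{Cov}(\widetilde{X}_{J})$. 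What your version buys is an elementary, self-contained argument that does not need the GED machinery and that exhibits the slack exactly as the within-class scatter, which nicely substantiates the paper's remark about why minimizing $\gamma_{k}^{2}$ directly would inflate intra-class spread; what the paper's version buys is uniformity, since the identity $\gamma_{k}^{2}=\tfrac{1}{2}D_{E,\rho}$ holds for any distance-induced kernel, so the MMD bound falls out of Lem.~\ref{lem2} for a whole family of $(k,\rho)$ pairs rather than just the linear kernel. One small imprecision: for the linear kernel $k(z,z)^{1/2}=\|z\|$, so the stated condition $X_{I},X_{J}\in\mathcal{M}^{1/2}_{k}(\mathcal{Z})$ is a finite \emph{first}-moment condition (enough for $\gamma_{k}$ to exist), not the finite-second-moment condition you claim; this is harmless for the inequality itself, since if second moments are infinite then $L_{ec}=+\infty$ and the bound holds trivially, but the two conditions should not be conflated.
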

\begin{proof}
Insert the \emph{distance-induced kernel} $k$ by corresponding $\rho$ from Lem.\ref{lem1} into Eq.\ref{eq_mmd}, and cancel out the terms dependant on a single random variable, we have:
\begin{footnotesize}
\begin{align}
\gamma_{k}^{2}(X_{I},X_{J})&=\frac{1}{2}\mathbb{E}_{\widetilde{X}_{I}\widetilde{X}_{I}^{'}}[\rho(\widetilde{X}_{I},z_{0})+\rho(\widetilde{X}_{I}^{'},z_{0})-\rho(\widetilde{X}_{I},\widetilde{X}_{I}^{'})]\nonumber\\
&+\frac{1}{2}\mathbb{E}_{\widetilde{X}_{J}\widetilde{X}_{J}^{'}}[\rho(\widetilde{X}_{J},z_{0})+\rho(\widetilde{X}_{J}^{'},z_{0})-\rho(\widetilde{X}_{J},\widetilde{X}_{J}^{'})]\nonumber\\
&-\mathbb{E}_{\widetilde{X}_{I}\widetilde{X}_{J}}[\rho(\widetilde{X}_{I},z_{0})+\rho(\widetilde{X}_{J},z_{0})-\rho(\widetilde{X}_{I},\widetilde{X}_{J})]\nonumber\\
=\mathbb{E}_{X_{I}X_{J}}&\rho(X_{I},X_{J})-\frac{1}{2}\mathbb{E}_{X_{I}X_{I}^{'}}\rho(X_{I},X_{I}^{'})-\frac{1}{2}\mathbb{E}_{X_{J}X_{J}^{'}}\rho(X_{J},X_{J}^{'})
\end{align}
\end{footnotesize}
\vspace{-1em}

i.e. $\gamma_{k}^{2}(X_{I},X_{J})=\frac{1}{2}D_{E,\rho}(X_{I},X_{J})$, since $k$ is $degree$-$1$ homogeneous polynomial kernel, from Prop.\ref{prop3} we have the corresponding generated $\rho=\|\cdot-\cdot\|^{2}_{2}$, then by using Lem.\ref{lem2}, we have $L_{ec}(\theta_{f};X_{I},X_{J})\geq\gamma_{k}^{2}(X_{I},X_{J})$.
\end{proof}
\textbf{Remark}: From Lem.\ref{lem3}, one can observe that our EC can also be viewed as an upper bound of MMD. Moreover, it seems that directly optimizing MMD with $degree$-$1$ homogeneous polynomial kernel, i.e. $\gamma_{k}^{2}=\|\mathbb{E}(\widetilde{X}_{I})-\mathbb{E}(\widetilde{X}_{J})\|^{2}_{\mathcal{H}_{k}}$, is reasonable as well, since many existing works employ this to pull two probability distributions closer, such as in transfer learning\cite{long2015learning,long2016deep,tzeng2014deep}. However, by expanding this $\gamma_{k}^{2}$, we have $\small{\gamma_{k}^{2}=\mathbb{E}({\widetilde{X}_{I}}^{T}\widetilde{X}^{'}_{I})+\mathbb{E}({\widetilde{X}_{J}}^{T}\widetilde{X}^{'}_{J})-2\mathbb{E}({\widetilde{X}_{I}}^{T}\widetilde{X}_{J})}$, and in this case, if we minimize $\gamma_{k}^{2}$ so as to pull different classes distributions closer and thus confuse the metric learning, we will additionally force $\mathbb{E}({\widetilde{X}_{I}}^{T}\widetilde{X}^{'}_{I})$+$\mathbb{E}({\widetilde{X}_{J}}^{T}\widetilde{X}^{'}_{J})$ to be small, which implicitly pushes the points within the same class further apart as their inner-products are getting small. This results also aren't what we desire and will degrade the model into a noisy counterpart. Therefore, MMD cannot be directly applied here as well.

\textbf{Remark Summary}: We theoretically derive the relations between our EC and both GED and MMD, and also reason about why they cannot be directly applied here even if they have been widely adopted in many machine learning tasks for measuring probability distributions. Thus, we will focus on 'confusing' the metric learning via our EC term.
\subsection{4.2 Energy Confused Adversarial Metric Learning}\label{sec_rEC}
The framework of ECAML can be generally applied to various metric learning objective functions, where we simultaneously train our Energy Confusion term and the distance metric term as follows:
\begin{equation}\label{eq_ecaml}
  \small{\min_{\theta_{f}}L=L_{m}(\theta_{f};T,D)+\lambda\sum_{I,J,I\neq{J}}{L_{ec}(\theta_{f};X_{I},X_{J})}}
  \vspace{-0.3em}
\end{equation}
where $\lambda$ is the trade-off hyper-parameter and class sets $X_{I},X_{J}$ are randomly chosen in the current minibatch. In order to demonstrate the effectiveness of the proposed ECAML framework, we develop various SOTA metric learning objective functions here, i.e. $\small{L_{m}(\theta_{f};T,D)}$:

\textbf{ECAML(Tri)}: For triplet-tuple $T$ and Euclidean measurement $D$, we employ\cite{Schroff2015FaceNet,wang2015unsupervised}:
\vspace{-0.5em}
\begin{equation}\label{eq_tri}
  \small{L_{m}(\theta_{f};T,D)=\sum_{i}^{N}[\|x_{i}-x_{i^{+}}\|^{2}_{2}-\|x_{i}-x_{i^{-}}\|^{2}_{2}+m]_{+}}
  \vspace{-0.5em}
\end{equation}
where the objective limits the distances of negative pairs larger than that of the positive pairs by margin $m$ and features $x_{i}$ is assumed to be on unit sphere, we experimentally find $m=0.1$ performs best.

\textbf{ECAML(N-Pair)}: For N-tuple $T$ and inner-product measurement $D$, we employ \cite{Sohn2016npair}:
\vspace{-0.5em}
\begin{equation}\label{eq_npair}
   \small{L_{m}(\theta_{f};T,D)=\sum_{i=1}^{N}\log(1+\sum_{j=1,y_{j}\neq{y_{i}}}^{N}exp(x_{i}^{T}x_{j}-x_{i}^{T}x_{i^{+}}))}
\end{equation}
where the objective limits the inner-product of each negative pair $x_{i}^{T}x_{j}$ smaller than that of the positive pair $x_{i}^{T}x_{i^{+}}$.

\textbf{ECAML(Binomial)}: For contrastive-tuple $T$ and cosine measurement $D$, we employ\cite{yi2014deep,Opitz_2017_ICCV}:
\vspace{-0.5em}
\begin{equation}\label{eq_bier}
  \small{L_{m}(\theta_{f};T,D)=\sum_{i,j}\log(1+e^{-(2s_{ij}-1)\alpha(D_{ij}-\beta)\eta_{ij}})}
  \vspace{-0.5em}
\end{equation}
where $s_{ij}=1$ if $x_{i},x_{j}$ are from the same class, otherwise $s_{ij}=0$, $\small{\alpha=2,\beta=0.5}$ are the scaling and translation parameters \emph{resp}, $\eta_{ij}$ is the penalty coefficient and is set to $1$ if $s_{ij}=1$, otherwise $\eta_{ij}=25$, $D_{ij}=\frac{x_{i}^{T}x_{j}}{\|x_{i}\|\|x_{j}\|}$.

Moreover, for numerical stability, we extend our EC to a logarithmic counterpart and thus Eq.\ref{eq_ecaml} becomes:
\vspace{-0.5em}
\begin{equation}\label{eq_log_ecaml}
  \small{\min_{\theta_{f}}L=L_{m}(\theta_{f};T,D)+\lambda\sum_{I,J,I\neq{J}}\log(1+{L_{ec}(\theta_{f};X_{I},X_{J})})}
  \vspace{-0.3em}
\end{equation}
\textbf{Discussion}: From Eq.\ref{eq_log_ecaml}, our ECAML is achieved by jointly training the conventional metric objective and the proposed Energy Confusion goal. These two terms form an adversarial learning scheme by optimizing the opposite objective functions. Specifically, $L_{m}$ acts as a 'defender' and $L_{ec}$ acts as an 'attacker', the attacker intends to confuse the metric so as to make it confound with the training data, while in order to correctly distinguish the training data, the defender has to learn more 'general' and complementary concepts. As the defending-attacking going, the learned embedding will be less likely to the prejudiced concepts and, thus successfully prevent the biased learning behavior and improve the generalization ability. Moreover, we experimentally find that the overfitting mainly appears at the fc layer, thus our EC term is only used to constrain the learning of fc layer.
\vspace{-0.5em}
\section{5. Experiments and Results}
\textbf{Implementation details}: Following many other works, e.g. \cite{oh2016deep,Sohn2016npair}, we choose the pretrained \emph{GooglenetV1}\cite{Szegedy2014Going} as our bedrock CNN and randomly initialized an added fully connected layer. If not specified, we set the embedding size as 512 throughout our experiments. We also adopt exactly the same data preprocessing method\cite{oh2016deep} so as to make fair comparisons with other works\footnote{Only the images in CARS dataset are preprocessed differently, see the detail underneath Tab.\ref{tab_car}}. For training, the optimizer is Adam\cite{kingma2014adam} with learning rate $\footnotesize{1e-5}$ and weight decay $\footnotesize{2e-4}$. The training iterations are $5k$(CUB), $10k$(CARS), $20k$(Stanford Online Products and In-Shop), \emph{resp}.
The new fc-layer is optimized with 10 times learning rate for fast convergence. Moreover, for fair comparison, we use minibatch of size 128 throughout our experiments, which is composed of $64$ random selected classes with two instances each class. Our work is implemented by caffe\cite{jia2014caffe}.

\textbf{Evaluation and datasets}: The same as many other works, the retrieval performance is evaluated by Recall@K metric. And following \cite{oh2016deep}, we evaluate the clustering performances via \emph{normalized mutual information}(NMI) and F$_{1}$ metrics. The input of NMI is a set of clusters $\Omega=\{\omega_{1},\ldots,\omega_{K}\}$ and the ground truth classes $\mathbb{C}=\{c_{1},\ldots,c_{K}\}$, where $\omega_{i}$ represents the samples that belong to the $i$th cluster, and $c_{j}$ is the set of samples with label $j$. NMI is defined as the ratio of mutual information and the mean entropy of clusters and the ground truth, NMI($\Omega,\mathbb{C}$)$=\frac{2I(\Omega,\mathbb{C})}{H(\Omega)+H(\mathbb{C})}$, and F$_{1}$ metric is the harmonic mean of precision and recall as follows F$_{1}=\frac{2PR}{P+R}$. Then our ECAML is evaluated over the widely used benchmarks with the standard \emph{zero-shot} evaluation protocol\cite{oh2016deep}:
\begin{small}
\begin{enumerate}[1)]
  \item \textbf{CARS}\cite{Krause20133D} contains 16,185 car images from 196 classes. We split the first 98 classes for training (8,054 images) and the rest 98 classes for testing (8,131 images).\vspace{-0.5em}
  \item \textbf{CUB}\cite{Wah2011The} includes 11,788 bird images from 200 classes.We use the first 100 classes for training (5,864 images) and the rest 100 classes for testing (5,924 images).\vspace{-0.5em}
  \item \textbf{Stanford Online Products}\cite{oh2016deep} has 11,318 classes for training (59,551 images) and the other 11,316 classes for testing (60,502 images).\vspace{-0.5em}
  \item \textbf{In-Shop}\cite{liu2016deepfashion} contains 3,997 classes for training(25,882 images) and the resting 3,985 classes for testing(28,760 images). The test set is partitioned into the query set of 3,985 classes(14,218 images) and the retrieval database set of 3,985 classes(12,612 images).
\end{enumerate}
\end{small}
\vspace{-0.5em}
\subsection{5.1 Ablation Experiments}
We show the primary results below and the qualitative analysis(embedding visualization) is placed in Supplementary.

\textbf{Regularization ability}:
\begin{figure}[t]
  \centering
  \vspace{-1em}
  \begin{minipage}{1\linewidth}
  \centering
  \includegraphics[width=1\linewidth]{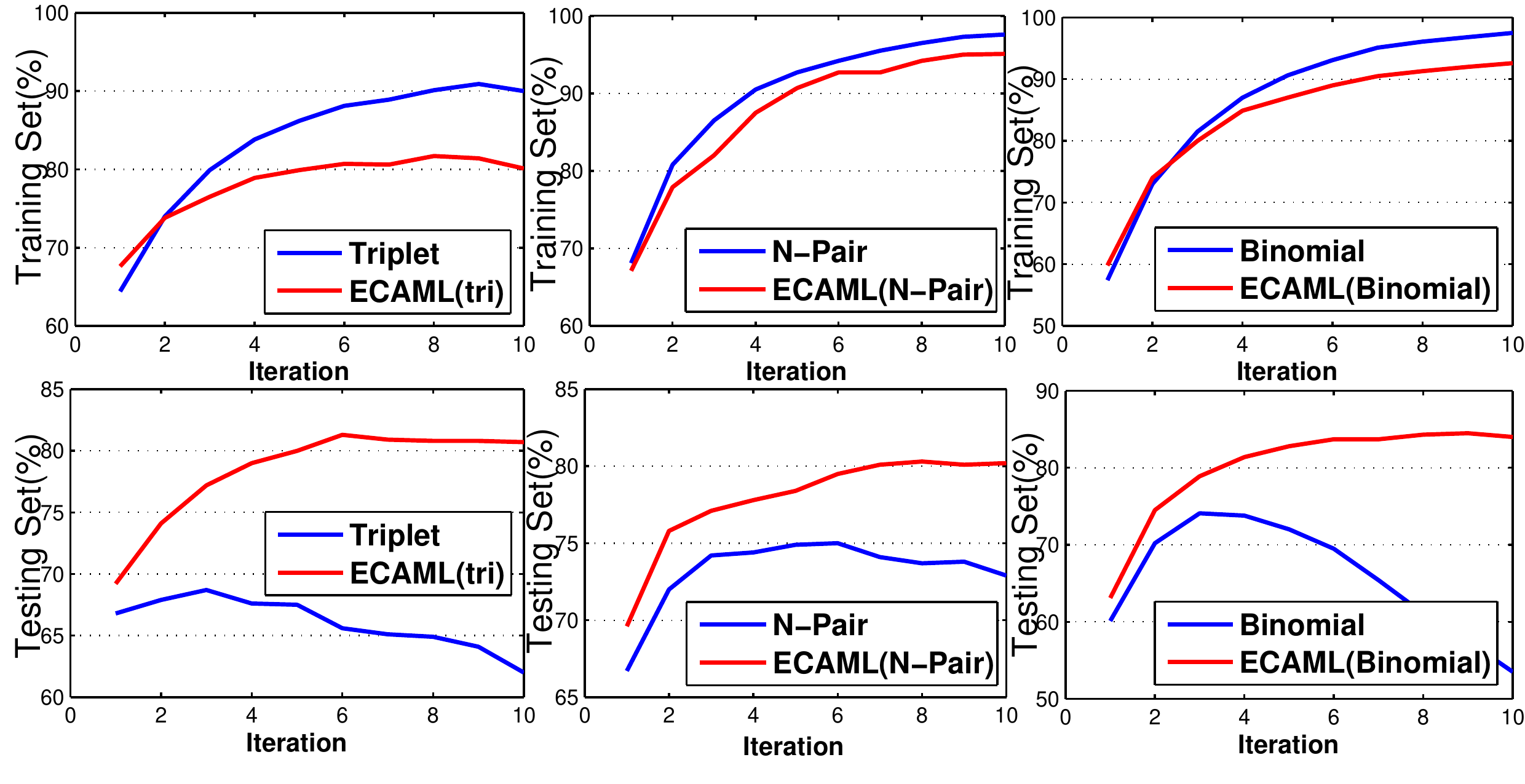}\\
  \vspace{-0.8em}\captionsetup{font={small}}
  \caption{Recall@1 curves on training(seen classes, top fig) and testing(unseen classes, bottom fig) sets over CARS dataset.}
  \label{fig_generalization}
  \end{minipage}
  \vspace{-1em}
\end{figure}
To demonstrate the regularization ability of our ECAML, we plot the R@1 retrieval result curves on training(\emph{seen}) and testing(\emph{unseen}) sets \emph{resp}, as in Fig.\ref{fig_generalization}. Specifically, for example, from the figures in left column, one can observe that the training curve of the conventional Triplet method rises quickly to a relatively high level but its testing curve only rises a little at first and then starts dropping to quite a low level, showing that the metric learned by conventional Triplet are more likely to over-fit the \emph{seen} classes and generalize worse to the \emph{unseen} classes in \emph{zero-shot} settings. Conversely, by employing our ECAML(Tri), the training result curve rises much slower than the original Triplet and stops rising at a relatively lower level ($80\%$ \emph{vs.} $90\%$), however, the testing cure of our ECAML(Tri) rises fast to quite a high level, more than $80\%$, implying that our ECAML(Tri) indeed serves as a regularization method and improves the generalization ability of the learned metric by suppressing the learning of biased metric over \emph{seen} classes caused by 'biased learning behavior'. Moreover, the similar phenomenon can be observed by ECAML(N-Pair,Binomial).
\begin{table}[t]
  \centering
  \resizebox{1\linewidth}{!}{
    \begin{tabular}{ccccccc}
    \hline
    \multicolumn{7}{c}{CARS R@1} \\
    \hline
      $\lambda$    & 0 (Triplet) & 0.001 & 0.01  & 0.02  & 0.1   & 1 \\
    ECAML(tri) & 68.3  & 74.6  & 80.1  & \textbf{81.0}    & 72.3  & 59.3 \\
    \hline
      $\lambda$    & 0 (N-Pair) & 0.1   & 0.2   & 0.3   & 0.4   & 0.5 \\
    ECAML(N-Pair) & 74.3  & 77.4  & 79.6  & \textbf{80.4}  & 78.6  & 73.7 \\
    \hline
      $\lambda$    & 0 (Binomial) & 0.01  & 0.1   & 0.13  & 0.15  & 0.5 \\
    ECAML(Binomial) & 74.2  &  76.3   & 83.1  & \textbf{84.5}  &  84.3   & 69.7 \\
    \hline
    \end{tabular}%
    }\captionsetup{font={small}}\vspace{-0.5em}
      \caption{Ablation experimental results on parameter $\lambda$.}
      \vspace{-1em}
  \label{tab_lamda}%
\end{table}%

\vspace{-0.5em}
\textbf{Ablation experiments on $\bf{\lambda}$}: To show the effectiveness of the parameter $\lambda$, here for simplicity, we just show the results of ECAML(tri,N-Pair,Binomial) with different $\lambda$ on CARS benchmark as in Tab.\ref{tab_lamda}. It can be observed that when $\lambda=0$ our ECAML degenerates into the corresponding conventional metric learning method and the performance is unsatisfactory, and as $\lambda$ increasing, the performances of ECAML(tri,N-Pair,Binomial) peak around \{$0.02, 0.3, 0.13$\} \emph{resp} and outperform the baselines (Triplet, N-Pair, Binomial) by a large margin, validating the effectiveness and importance of our ECAML.

\begin{table}[t]
  \centering
  \resizebox{0.9\linewidth}{!}{
    \begin{tabular}{lcccccc}
    \hline
    \multicolumn{7}{c}{CARS} \\
    \hline
       Methods   &  R@1  &  R@2     &   R@4    &   R@8    &   NMI    & F1 \\
    \hline
    Binomial-128 & 70.8  & 80.8  & 87.3  & 92.1  & 61.2  & 29.3 \\
    ECAML(Binomial)-128 & \textbf{79.6}    & \textbf{87.4}  & \textbf{91.8}  & \textbf{94.5}  & \textbf{64.6}  & \textbf{32.9} \\
    \hline
    Binomial-256 & 73.3  & 82.4  & 88.5  & 92.5  & 61.5  & 30.0 \\
    ECAML(Binomial)-256 & \textbf{82.0}    & \textbf{88.5}  & \textbf{92.5}  & \textbf{95.3}  & \textbf{66.4}  & \textbf{35.3} \\
    \hline
    Binomial-384 & 73.9  & 82.5  & 88.8  & 93.2  & 61.9  & 30.5 \\
    ECAML(Binomial)-384 & \textbf{83.5}  & \textbf{89.8}  & \textbf{93.5}  & \textbf{95.9}  & \textbf{67.0}    & \textbf{35.8} \\
    \hline
    Binomial-512 & 74.2  & 83.1  & 86.7  & 92.9  & 61.5  & 28.8 \\
    ECAML(Binomial)-512 & \textbf{84.5}  & \textbf{90.4}  & \textbf{93.8}  & \textbf{96.6}  & \textbf{68.4}  & \textbf{38.4} \\
    \hline
    \end{tabular}%
    }\captionsetup{font={small}}\vspace{-0.5em}
      \caption{Ablation experimental results on embedding size.}
      \vspace{-1.5em}
  \label{tab_dim}%
\end{table}%
\textbf{Ablation experiments on embedding size}: We also conduct quantitative experiments on embedding size with ECAML(Binomial). From Tab.\ref{tab_dim}, it can be observed that for the conventional Binomial metric learning method, most of the evaluation indexes' results (e.g. R@4, R@8, NMI and F$_{1}$) don't increase with the embedding size (from 128-dim to 512-dim) and even have a decrease trend, showing that the risk of overfitting increases with feature size and without robustness learning the performances of the learned embedding cannot be guaranteed even if its theoretical representation ability will increase with the feature size.
However, by employing our ECAML, the performances can be consistently improved and indeed increase with embedding size, demonstrating the importance and superiority of robust metric learning in ZSRC tasks.

\textbf{Ablation Study on Regularization Method} There are some other research works aiming at imposing regularization in the top layer of the whole network, such as label-smothing\cite{szegedy2016rethinking}, label-disturbing\cite{xie2016disturblabel} and Noisy-Softmax\cite{chen2017noisy}. However these methods are all designed for Softmax classifier layer and cannot be applied in the metric learning methods. Then, in order to show the effectiveness of our ECAML in the metric learning framework, we compare it with the commonly used 'Dropout' method. The dropout layer is placed after the CNN model. From Tab.\ref{tab_drop}, one can observe that although the dropout with ratio $0.25$ improves most of the performances over the baseline, the improvements are limited and not worthy of attention. However, in contrast to Dropout, our ECAML significantly surpasses the baseline model by a large margin. We conjecture that is because Dropout is not specially designed for the metric learning and the tested datasets are all fine-grained datasets in which simply depressing the neurons to be zero will largely affects the estimated distributions of these fine-grained classes regardless of the ratio value due to the small inter-class variations (for example, by using a smaller ratio (e.g. $0.1$) the performance will still be reduced). In summary, our ECAML regularization method is specially designed for the deep metric learning and indeed performs well.
\begin{table}[t]
  \centering
  \resizebox{0.9\linewidth}{!}{
    \begin{tabular}{ccccc}
    \hline
          & \multicolumn{4}{c}{CARS}   \\
          \hline
          & R@1   & R@2   & R@3   & R@4   \\
          \hline
    Binomial & 74.2  & 83.1  & 86.7  & 92.9  \\
    Dropout(Binomial,0.1) & 73.1  & 82.1  & \textcolor[rgb]{0,1,0}{88.6}  & 92.6 \\
    Dropout(Binomial,0.25) & \textcolor[rgb]{0, 1, 0}{74.5}  & \textcolor[rgb]{0, 1, 0}{83.3}  & 85.9  & 92.6  \\
    Dropout(Binomial,0.4) & 72.4  & 81.4  & \textcolor[rgb]{0, 1, 0}{87.5}  & 92.5 \\
    ECAML(Binomial) & \textcolor[rgb]{1, 0, 0}{84.5}  & \textcolor[rgb]{1, 0, 0}{90.4}  & \textcolor[rgb]{1, 0, 0}{93.8}  & \textcolor[rgb]{1, 0, 0}{96.6}   \\
    \hline
    \end{tabular}%
    }
    \vspace{-0.5em}\captionsetup{font={small}}
      \caption{Comparison with Dropout on CARS datasets. We have experimented Dropout with \{0.1,0.25,0.4\} ratio. The colored number represents improvement over the baseline Binomial method, specifically, red number indicates the improvement by our ECAML and green number by Dropout.}
      \vspace{-1em}
  \label{tab_drop}%
\end{table}%
\renewcommand\arraystretch{1.2}
\begin{table}[!t]
  \centering
  \resizebox{0.95\linewidth}{!}{
    \begin{tabular}{lccccccc}
    \hline
    \multicolumn{7}{c}{CARS} \\
    \hline
    Method  & R@1 &R@2 & R@4 & R@8 & NMI   & F1   \\
    \hline
    Lifted\small{\cite{oh2016deep}} & 49.0    & 60.3  & 72.1  & 81.5& 55.1  & 21.5 \\
    Clustering\small{\cite{songCVPR17}} & 58.1  & 70.6  & 80.3  & 87.8 & 59.0  & - \\
    Angular\small{\cite{wang2017deep}}  & 71.3  & 80.7  & 87.0    & 91.8& 62.4  & 31.8 \\
    ALMN\small{\cite{chen2018almn}}    & 71.6  & 81.3  & 88.2  & 93.4& 62.0    & 29.4 \\
    DAML\small{\cite{duan2018deep}} & 75.1 & 83.8 & 89.7 & 93.5 &\textcolor[rgb]{0, 0, 1}{66.0}&\textcolor[rgb]{0, 0, 1}{36.4}\\
    \hline\hline
    Triplet   & 68.3  & 78.3  & 86.2  & 91.7 &  59.2  &   26.2   \\
    ECAML(Tri)  & \textcolor[rgb]{0, 0, 1}{\emph{\textbf{81.0}}}    & \textcolor[rgb]{0, 0, 1}{\emph{\textbf{88.2}}}  & \textcolor[rgb]{0, 0, 1}{\emph{\textbf{92.8}}}  & \textcolor[rgb]{0, 0, 1}{\emph{\textbf{96.0}}} &  \emph{\textbf{65.7}}  & \emph{\textbf{33.0}}  \\
    \hline
    N-Pair   & 74.3  & 83.6  & 90.2  & 93.1 &   61.8   &  29.9 \\
    ECAML(N-Pair)  & \emph{\textbf{80.4}}  & \textcolor[rgb]{0, 0, 1}{\emph{\textbf{88.2}}}  & \emph{\textbf{92.4}}  & \emph{\textbf{95.8}} &  \textbf{\emph{64.6}}     & \textbf{\emph{32.7}}  \\
    \hline
    Binomial    & 74.2  & 83.1  & 86.7  & 92.9 &   61.5   &  28.8  \\
    ECAML(Binomial)    & \textcolor[rgb]{1, 0, 0}{\emph{\textbf{84.5}}}    & \textcolor[rgb]{1, 0, 0}{\emph{\textbf{90.4}}}    & \textcolor[rgb]{1, 0, 0}{\emph{\textbf{93.8}}}  & \textcolor[rgb]{1, 0, 0}{\emph{\textbf{96.6}}} &    \textcolor[rgb]{1, 0, 0}{\emph{\textbf{68.4}}}   & \textcolor[rgb]{1, 0, 0}{\emph{\textbf{38.4}}}  \\
    \hline
    \end{tabular}%
    }\vspace{-0.5em}\captionsetup{font={scriptsize}}
      \caption{Comparisons(\%) with state-of-the-arts on CARS\cite{Krause20133D}. $\lambda$ for ECAML(tri, N-Pair, Binomial) are \{$0.02,0.3,0.13$\} \emph{resp}. Here, the images are directly resized to 256x256, which are different from\cite{oh2016deep}, then a 227x227 random region is cropped.}\vspace{-0.8em}
  \label{tab_car}%
\end{table}%
\begin{table}[!t]
  \centering
   \resizebox{0.95\linewidth}{!}{
    \begin{tabular}{lccccccc}
    \hline
    \multicolumn{7}{c}{CUB} \\
    \hline
    Method & R@1 & R@2 & R@4 & R@8 & NMI   & F1 \\
    \hline
    Lifted\small{\cite{oh2016deep}} & 47.2  & 58.9  & 70.2  & 80.2 &  56.2   &  22.7 \\
    Clustering\small{\cite{songCVPR17}} & 48.2  & 61.4  & 71.8  & 81.9 &  59.2   & -  \\
    Angular\small{\cite{wang2017deep}}  & \textcolor[rgb]{0, 0, 1}{53.6}  &65.0    & 75.3  & 83.7 & 61.0    & \textcolor[rgb]{0, 0, 1}{30.2}\\
    ALMN\small{\cite{chen2018almn}}  & 52.4  & 64.8  & 75.4  & 84.3 & 60.7  & 28.5 \\
    DAML\small{\cite{duan2018deep}} & 52.7 & \textcolor[rgb]{0, 0, 1}{65.4} & 75.5 & 84.3 &\textcolor[rgb]{0, 0, 1}{61.3}&29.5\\
    \hline\hline
    Triplet   & 49.5  & 61.7  & 73.2  & 82.5 &  57.2  & 24.1\\
    ECAML(Tri)  & \textbf{\emph{53.4}}  & \textbf{\emph{64.7}}  & \textbf{\emph{75.1}}  & \emph{\textbf{84.7}} &   \emph{\textbf{60.1}}   &  \emph{\textbf{26.9}} \\
    \hline
    N-Pair  & 51.9  & 63.3  & 73.9  & 83.0 &  59.7  & 26.5 \\
    ECAML(N-Pair)  & \textbf{\emph{53.2}}  & \textbf{\emph{65.1}}  & \textcolor[rgb]{0, 0, 1}{\textbf{\emph{75.9}}}  & \textcolor[rgb]{0, 0, 1}{\textbf{\emph{84.9}}} & \emph{\textbf{60.4}}      & \emph{\textbf{28.5}} \\
    \hline
    Binomial     & 52.9  & 65.0    & 75.4  & 83.6 &    59.0   &  26.5 \\
    ECAML(Binomial)    & \textcolor[rgb]{1.000, 0.000, 0.000}{\emph{\textbf{55.7}}}  & \textcolor[rgb]{1.000, 0.000, 0.000}{\emph{\textbf{66.5}}}  & \textcolor[rgb]{1.000, 0.000, 0.000}{\emph{\textbf{76.7}}}  & \textcolor[rgb]{1.000, 0.000, 0.000}{\emph{\textbf{85.1}}} &   \textcolor[rgb]{1.000, 0.000, 0.000}{\emph{\textbf{61.8}}}   &  \textcolor[rgb]{1.000, 0.000, 0.000}{\emph{\textbf{30.5}}}  \\
    \hline
    \end{tabular}%
    }\vspace{-0.5em}\captionsetup{font={scriptsize}}
      \caption{Comparisons(\%) with state-of-the-arts on CUB\cite{Wah2011The}. $\lambda$ for ECAML(tri, N-Pair, Binomial) are \{$0.02,0.3,0.13$\} \emph{resp}.}\vspace{-0.8em}
  \label{tab_cub}%
\end{table}%
\begin{table}[!t]
  \centering
  \resizebox{0.95\linewidth}{!}{
    \begin{tabular}{lccccccc}
    \hline
    \multicolumn{7}{c}{Stanford Online Products} \\
    \hline
    Method  &R@1 &R@10 & R@100 & R@1000 & NMI   & F1  \\
    \hline
    Lifted\small{\cite{oh2016deep}}  & 62.1  & 79.8  & 91.3  & 97.4&  87.4  &  24.7 \\
    Clustering\small{\cite{songCVPR17}}   & 67.0    & 83.7  & 93.2  & -&  \textcolor[rgb]{0, 0, 1}{89.5}  &  - \\
    Angular\small{\cite{wang2017deep}}  & \textcolor[rgb]{0, 0, 1}{70.9}  & \textcolor[rgb]{0, 0, 1}{85.0}    & \textcolor[rgb]{0, 0, 1}{93.5}  & \textcolor[rgb]{1, 0, 0}{98.0} & 87.8  & 26.5 \\
    ALMN\small{\cite{chen2018almn}}  & 69.9  & 84.8  & 92.8  & - & -     & - \\
    DAML\small{\cite{duan2018deep}} & 68.4 & 83.5 & 92.3 & - &89.4&\textcolor[rgb]{0, 0, 1}{32.4}\\
    \hline\hline
    Triplet & 57.9  & 75.6  & 88.5  & 96.3 & 86.4  & 20.8\\
    ECAML(Tri)   & \emph{\textbf{64.9}}  & \emph{\textbf{80.0}}    & \emph{\textbf{90.5}}  & \emph{\textbf{96.9}} & \emph{\textbf{87.0}}    & \emph{\textbf{23.3}}\\
    \hline
    N-Pair     & 68.0  &   84.0    &   93.1   & 97.8 &  87.6  &  25.8 \\
    ECAML(N-Pair)    & \emph{\textbf{69.8}}  & \emph{\textbf{84.7}}      &  \emph{\textbf{93.2}}     & \emph{\textbf{97.8}} &   \emph{\textbf{88.0}}    &   \emph{\textbf{27.2}}  \\
    \hline
    Binomial      & 68.5  &  84.0  &   93.1    & 97.7 &   88.5  & 29.9  \\
    ECAML(Binomial)   & \textcolor[rgb]{1, 0, 0}{\emph{\textbf{71.3}}}  &   \textcolor[rgb]{1, 0, 0}{\emph{\textbf{85.6}}}    &    \textcolor[rgb]{1, 0, 0}{\emph{\textbf{93.6}}}   & \textcolor[rgb]{1, 0, 0}{\emph{\textbf{98.0}}} & \textcolor[rgb]{1, 0, 0}{\emph{\textbf{89.9}}}  & \textcolor[rgb]{1, 0, 0}{\emph{\textbf{32.8}}} \\
    \hline
    \end{tabular}%
    }\vspace{-0.5em}\captionsetup{font={scriptsize}}
      \caption{Comparisons(\%) with state-of-the-arts on Stanford Online Products\cite{oh2016deep}. $\lambda$ for ECAML(tri, N-Pair, Binomial) are \{$0.002,0.03,0.013$\} \emph{resp}.}\vspace{-0.8em}
  \label{tab_sop}%
\end{table}%
\begin{table}[!t]
  \centering
  \resizebox{0.95\linewidth}{!}{
    \begin{tabular}{lcccccc}
    \hline
    \multicolumn{7}{c}{In-Shop} \\
    \hline
    Method & R@1   & R@10  & R@20  & R@30  & R@40  & R@50  \\
    \hline
    FashionNet\small{\cite{liu2016deepfashion}} & 53    & 73    & 76    & 77    & 79    & 80 \\
    HDC\small{\cite{Yuan_2017_ICCV}}   & 62.1  & 84.9  & 89.0    & 91.2  & 92.3  & 93.1 \\
    BIER\small{\cite{Opitz_2017_ICCV}}  & 76.9  & 92.8  & 95.2  & 96.2  & 96.7  & 97.1 \\
    \hline\hline
    Triplet & 64.4  & 87.1  & 91.0    & 92.7  & 93.9  & 94.8  \\
    ECAML(Tri) & \emph{\textbf{68.0}}    & \emph{\textbf{89.9}}  & \emph{\textbf{93.3}}  & \emph{\textbf{94.8}}  & \emph{\textbf{95.7}}  & \emph{\textbf{96.3}}  \\
    \hline
    N-Pair & 78.2  & 94.3  & 96.0    & 96.9  & 97.4  & 97.7  \\
    ECAML(N-Pair) &  \emph{\textbf{79.8}}  &    \textcolor[rgb]{0, 0, 1}{\emph{\textbf{94.6}}}   &   \emph{\textbf{96.1}}    &    \emph{\textbf{97.0}}   &     97.4  &   97.7 \\
    \hline
    Binomial & \textcolor[rgb]{0, 0, 1}{81.7}  & 94.5  & \textcolor[rgb]{0, 0, 1}{96.2}  & \textcolor[rgb]{0, 0, 1}{97.2}  & \textcolor[rgb]{0, 0, 1}{97.6}  & \textcolor[rgb]{0, 0, 1}{97.9} \\
    ECAML(Binomial) & \textcolor[rgb]{1, 0, 0}{\emph{\textbf{83.8}}}  & \textcolor[rgb]{1, 0, 0}{\emph{\textbf{95.1}}}  & \textcolor[rgb]{1, 0, 0}{\emph{\textbf{96.6}}}  & \textcolor[rgb]{1, 0, 0}{\emph{\textbf{97.3}}}  & \textcolor[rgb]{1, 0, 0}{\emph{\textbf{97.7}}}  & \textcolor[rgb]{1, 0, 0}{\emph{\textbf{98.0}}}  \\
    \hline
    \end{tabular}%
    }\vspace{-0.5em}\captionsetup{font={scriptsize}}
      \caption{Comparisons(\%) with state-of-the-arts on In-shop\cite{liu2016deepfashion}. $\lambda$ for ECAML(tri, N-Pair, Binomial) are \{$0.002,0.03,0.013$\} \emph{resp}.}\vspace{-2em}
  \label{tab_shop}%
\end{table}%
\vspace{-0.5em}
\subsection{5.2 Comparison with State-of-the-art}
To highlight the significance of our ECAML framework, we compare with the aforementioned corresponding baseline methods, i.e. the wildly used Triplet\cite{Schroff2015FaceNet}, N-Pair\cite{Sohn2016npair} and Binomial\cite{yi2014deep}, moreover, we also compare ECAML with other SOTA methods. The experimental results over CUB, CARS, Stanford Online Products and In-shop are in Tab.\ref{tab_car}-Tab.\ref{tab_shop} \emph{resp}, bold number indicates improvement over baseline method, red and blue number show the best and second best results \emph{resp}. From these tables, one can observe that our ECAML consistently improves the performances of original metric learning methods (i.e. Triplet, N-Pair and Binomial) on all the benchmark datasets by a large margin, demonstrating the necessity of explicitly enhancing the generalization ability of the learned metric and validating the universality and effectiveness of our ECAML. Furthermore, our ECAML(Binomial) also surpasses all the listed state-of-the-art approches. In summary, learning 'general' concepts by avoiding the biased learning behavior is more important in ZSRC tasks and the generalization ability of the optimized metric heavily affects the performance of conventional metric learning methods.
\vspace{-0.5em}
\section{6. Conclusion}
In this paper, we propose the \emph{Energy Confused Adversarial Metric Learning} (ECAML) framework, a generally applicable methods to various conventional metric learning approaches, for ZSRC tasks by explicitly intensifying the generalization ability within the learned embedding with the help of our Energy Confusion term. Extensive experiments on the popular ZSRC benchmarks(CUB, CARS, Stanford Online Products and In-Shop) demonstrate the significance and necessity of our idea of learning metric with good generalization by energy confusion.

\textbf{Acknowledgments}: This work was partially supported by the National Natural Science Foundation of China under Grant Nos. 61573068 and 61871052, Beijing Nova Program under Grant No. Z161100004916088, and supported by BUPT Excellent Ph.D. Students Foundation CX2019307.
{\tiny
\bibliographystyle{aaai}
\bibliography{Bibliography-File}
}

\end{document}